\def\bv{{\boldsymbol v}}
\def\bw{{\boldsymbol w}}
\def\bx{{\boldsymbol x}}
\def\by{{\boldsymbol y}}
\newtheorem*{rep@theorem}{\rep@title}
\newcommand{\newreptheorem}[2]{%
\newenvironment{rep#1}[1]{%
 \def\rep@title{#2 \ref{##1}}%
 \begin{rep@theorem}}%
 {\end{rep@theorem}}}
\newtheorem{theorem}{Theorem}
\title{Exploring Space Efficiency in a Tree-based Linear Model for 
Extreme Multi-label Classification}
\author{
 \textbf{He-Zhe Lin\textsuperscript{1,2}} \quad
 \textbf{Cheng-Hung Liu\textsuperscript{1}} \quad
 \textbf{Chih-Jen Lin\textsuperscript{1,2}}
\\
\\
 \textsuperscript{1}National Taiwan University\\
 \textsuperscript{2}Mohamed bin Zayed University of Artificial Intelligence
\\
   \texttt{\{b07902028, d07944009, cjlin\}@csie.ntu.edu.tw}
}
\begin{document}
\maketitle
\begin{abstract}
Extreme multi-label classification (XMC) aims to identify relevant subsets from numerous labels.
Among the various approaches for XMC,
tree-based linear models are effective due to their superior
efficiency and simplicity.
However, the space complexity of tree-based methods is not well-studied.
Many past works assume that storing the model 
is not affordable and apply techniques such as pruning to save space, which may lead to performance loss.
In this work, we conduct both theoretical and empirical analyses on the space to store a tree model under the assumption of sparse data, a condition frequently met in text data.
We found that 
some features may be unused 
when training binary classifiers in a tree method,
resulting in zero values in the weight vectors.
Hence, storing only non-zero elements can greatly save space.
Our experimental results indicate that tree models can require less than 10\% of the size of the standard one-vs-rest method
for multi-label text classification.
Our research provides a simple procedure to estimate the size of a tree model before training any classifier in the tree nodes.
Then, if the model size is already acceptable, this approach can help avoid modifying the model through weight pruning or other techniques.
\end{abstract}

\section{Introduction}

Extreme multi-label classification (XMC) focuses on tagging a given instance with a
relevant subset of labels from an extremely large label set.
There is a wide range of applications, from online retail search systems \citep[]{WCC21a}
to automatically tagging labels on a given article or web page \citep[]{HJ16a}.
XMC problems are commonly encountered in real-world applications,
especially in the area of text data.
Notably, \citet{KB16a} provide many text data sets for XMC.

Many methods have been proposed to solve XMC for text data.
For example, neural networks, particularly pre-trained language models, are effective due to their ability to understand context \citep{IC22B}.
However, linear methods with bag-of-words features
remain very useful for XMC due to
their simplicity and
superior efficiency \citep{HFY22a}.
Linear methods are also competitive in certain circumstances \citep{WCC21a}.
This motivates us to investigate the time and space complexity of
linear methods in XMC problems in this work.

Among linear methods for multi-label problems, 
the simplest one-vs-rest (OVR) setting 
treats a multi-label problem with $L$ labels as $L$ 
independent binary problems, 
learning a weight vector $\bw_j$
for each label $j \in \{1, \dots, L\}$.
However, for OVR in XMC,
current computing resources become unaffordable
as the training time and storage for $\bw_j$'s grow linearly with $L$.

An important line of research on reducing time and space is to construct a label tree \citep{YP18a, SK20a, HFY22a}.
The label tree recursively decomposes the XMC problem into smaller ones, and each node in the tree only handles a subset of labels so that the overall training time grows with respect to $O(\log^2 L)$ instead of $O(L)$ as shown in \citet{YP18a}.
However, we explain in Section~\ref{subsec:tree-based-methods} that more classifiers are needed for a label tree compared to OVR.
If weights of these classifiers are dense vectors in the same dimensionality of the input features, storing a tree-based model would need more space then OVR.
To tackle this issue, nearly all past works on tree-based models such as \citet{YP18a, SK20a, HFY22a} apply weight pruning to change small non-zero entries to zero.
However, our experiment in Section~\ref{sec:technique-for-reducing-model-size} shows that weight pruning via an improper threshold can result in varying degrees of performance drops.
This situation motivates us to study the model size of a tree-based model in practice.

In this work, we focus on a less-studied aspect -- the needed space to store a linear tree model for sparse data.
Data sparsity is ubiquitous in XMC.
For example, for the bag-of-words features widely used in text classification, each instance only contains several non-zero entries.
In the tree methods, each node handles a subset of labels and trains on instances corresponding to them.
Since the data are sparse, the training subset for a node may have some unused features, causing a reduction on the feature space.
Thus, if weight vectors are mapped back to the input feature space, many elements are zeros and we can use a sparse format to save the space.
This property suggests that a tree model -- despite having more classifiers -- may still take less space than an OVR model.
Surprisingly, none of the papers investigate the actual size of a tree model; instead, pruning is directly applied.

For sparse training data, 
we study this issue through theoretical analysis and experiments.
Our main finding is that for sparse data,
a tree-based linear model is smaller than what people thought before.
Compared with the OVR approach,
the model size is 10\% or even less for problems with
a large number of labels.
Our research suggests that one should not impose the pruning procedure without checking the model size, which can be easily calculated before the real training.

The outline is as follows.
Section~\ref{sec:linear-methods-for-XMC} defines the XMC problem and introduces two main linear approaches: OVR and tree-based methods.
Section~\ref{sec:complexity-analysis} compares both the time complexity and the model size of the two approaches.
We review some techniques to reduce the model size and discuss their potential issues in Section~\ref{sec:technique-for-reducing-model-size}.
We explain the space-efficiency of tree-based linear models for sparse data in Section~\ref{sec:inherent-pruning}.
Section~\ref{sec:experiment-results} presents experimental results and conclusions are provided in Section~\ref{sec:conclusions}.
This work is an extension of the first author's master thesis \cite{HZL24a}.
Additional materials including programs used for experiments are available at \url{https://www.csie.ntu.edu.tw/~cjlin/papers/multilabel_tree_model_size/}.
\section{Linear Methods for XMC}
\label{sec:linear-methods-for-XMC}
To address XMC, there are two main categories for linear methods: the one-vs-rest (OVR) method and tree-based methods.
In this section, we will introduce how these methods work.
Consider the multi-label classification problem involving $L$ labels and  $\ell$ training instances $\{(\bx_i, \by_i)\}_{i=1}^\ell$, where  $\bx_i \in \mathbb{R}^n$ represents the feature vector with $n$ features, and $\by_i \in \{-1, 1\}^L$ is the $L$-dimensional label vector.
For each label vector $\by_i$, if $\bx_i$ is associated with label $j$, then the $j$-th component of $\by_i$, denoted by $y_{ij}$, is $1$; otherwise, $y_{ij} = -1$.

\subsection{One-vs-rest (OVR) Method}
\label{subsec:one-vs-rest}

For each label $j \in \{1, \dots, L\}$, the linear one-vs-rest approach trains a binary linear classifier by using instances with label $j$ as positive instances and the others as negative ones.
The learned weight vector $\bw_j \in \mathbb{R}^n$ is obtained by solving the minimization problem 
\begin{equation}
    \label{eq:binary-relevance-loss}
    \begin{split}
    \min_{\bw} f(\bw) = \sum_{i=1}^\ell \xi(y_{ij}\bw^T \bx_i) + R_\lambda(\bw),
    \end{split}
\end{equation}
where $\xi(\cdot)$ is the loss function and $R_{\lambda}(\bw)$ denotes the regularization function with parameter $\lambda$.
As we will discuss in Section~\ref{sec:complexity-analysis}, OVR is inefficient since both the time and space complexity grow linearly in the number of labels.

\subsection{Tree-based Methods}
\label{subsec:tree-based-methods}
To reduce the needed time and space, many works \citep{YP18a, SK20a, HFY22a} focused on tree-based methods.
These methods use divide-and-conquer to recursively break down a multi-label problem with $L$ labels into several smaller sub-problems involving subsets of labels.
Figure~\ref{fig:tree} is an example of a constructed tree with nine labels.
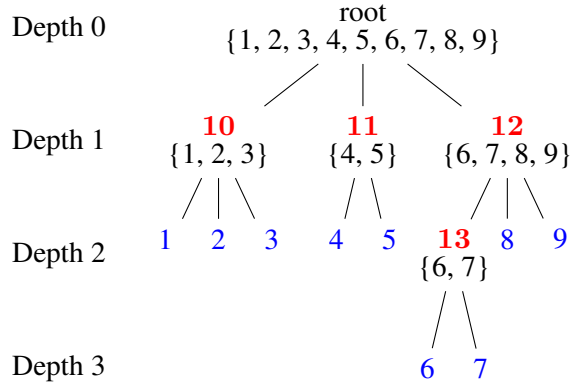
\begin{figure}[t]
    \centering
    \begin{tikzpicture}
        [level distance=1.5cm,
            level 1/.style={sibling distance=1.9cm},
            level 2/.style={sibling distance=0.7cm}]
        \node {\shortstack{root\\ \{1, 2, 3, 4, 5, 6, 7, 8, 9\}}}
        child {node {\shortstack{${\bf {\color{red}10}}$\\ \{1, 2, 3\}}}
            child {node {\shortstack{{\color{blue}1}\\\phantom{\{\}}}}
                child [grow=left, xshift=0.1cm] {node {Depth 2} edge from parent[draw=none]
                    child [grow=down] {node {Depth 3} edge from parent[draw=none]}
                    child [grow=up] {node {Depth 1} edge from parent[draw=none]
                        child [grow=up] {node {Depth 0} edge from parent[draw=none]}
                    }
                }
            }
            child {node {\shortstack{{\color{blue}2}\\\phantom{\{\}}}}}
            child {node {\shortstack{{\color{blue}3}\\\phantom{\{\}}}}}
        }
        child {node {\shortstack{${\bf {\color{red}11}}$ \\ \{4, 5\}}}
            child {node {\shortstack{{\color{blue}4}\\\phantom{\{\}}}}}
            child {node {\shortstack{{\color{blue}5}\\\phantom{\{\}}}}}
        }
        child {node {\shortstack{${\bf {\color{red}12}}$ \\ \{6, 7, 8, 9\}}}
            child {node {\shortstack{${\bf {\color{red}13}}$ \\ \{6, 7\}}}
                child {node {{\color{blue}6}}}
                child {node {{\color{blue}7}}}
            }
            child {node {\shortstack{{\color{blue}8}\\\phantom{\{\}}}}}
            child {node {\shortstack{{\color{blue}9}\\\phantom{\{\}}}}}
        }
        ;
    \end{tikzpicture}
    \caption[A label tree with nine labels.]{
    A label tree with nine labels. We set the number of clusters $K=3$ at each node for the label partition.
    In the figure, each internal node colored red is associated with a label subset, and each leaf node colored blue corresponds to a single label.
    }
    \label{fig:tree}
\end{figure}
A possible procedure of construction is as follows.
\begin{itemize}
    \item For each label $j = 1, \dots, L$, we compute the label representation $\bv_j$ by summing up the feature vectors of the instances associated with label $j$ followed by normalization. That is, 
    \begin{equation}
        \label{eq:label-representation}
        \begin{split}
        \bv'_j = \sum\limits_{i: y_{ij}=1}\bx_i \text{ and }
        \bv_j = \frac{\bv'_j}{\| \bv'_j \|_2}.
        \end{split}
    \end{equation}
    \item Starting from the root node, we apply a clustering method to all $\bv_j$'s to partition all labels into $K$ clusters. 
    Each cluster corresponds to a child node and contains a subset of labels.
    \item Each child node is recursively partitioned into $K$ clusters\footnote{Each child node may have different $K$'s, but for simplicity, we apply the same number of partitions here.} until either of the following termination conditions happen.
        \begin{itemize}
        \item The number of labels in a node is no more than $K$.
        \item The node reaches depth-$(d_\text{max}-1)$, where $d_\text{max}$ is a pre-set maximum tree depth.\footnote{We follow the setting in \citet{SK20a}. In contrast, \citet{YP18a} set a parameter $M$ to stop growing the tree if the number of labels in a node is less than $M$.}
        \end{itemize}
    If a node stops partitioning but still has multiple labels in the subset, we add a child node for each label in the subset.
    Therefore, every leaf node in the tree corresponds to a single label.
\end{itemize}
We denote $d$ as the actual depth of the tree.
It may be smaller than the specified depth $d_\text{max}$ if before $d_\text{max}$ each leaf node already has only one label.

\begin{table}[t]
\begin{adjustbox}{max width=0.49\textwidth,center}
\begin{tabular}{c|c|c}
& \begin{tabular}{c}positive\end{tabular} 
& \begin{tabular}{c}negative\end{tabular}\\
\hline
 $\bf 13$ & \begin{tabular}{c}with labels\\ 6 or 7 \end{tabular} & \begin{tabular}{c}with labels 8 or 9 but\\ neither with label 6 nor 7 \end{tabular} \\
 \hline
 8 & with label 8 & \begin{tabular}{c}with labels 6, 7 or 9\\ but not with label 8 \end{tabular} \\
 \hline
 9 & with label 9 & \begin{tabular}{c}with labels 6, 7 or 8\\ but not with label 9 \end{tabular} \\
\end{tabular}
\end{adjustbox}
\caption[An example for training an OVR model for the node of meta-label $\bf 12$ in Figure \ref{fig:tree}.]{An example for training an OVR model for the node of meta-label $\bf 12$ in Figure \ref{fig:tree},
where the node has 3 children including meta-label $\bf 13$, label 8, and label 9. For example, an instance having labels 6, 7 and 8 is regarded as positive in the first two binary problems since it has labels within the label subsets for meta-labels {\bf 13} and label 8.}
\label{table:example-train-internal-node-of-metalabel-12}
\end{table}

For easy discussion, for any node which is neither the root nor a leaf node, we tag a meta-label on it; see an example in Figure~\ref{fig:tree}.
In the label tree, each internal (non-leaf) node with $r$ child nodes corresponds to a multi-label classification problem of $r$ (meta)-labels.
The $r$ children may include meta-labels (i.e., internal nodes) and the original labels (i.e, leaf nodes).
Similar to the OVR setting, we train a binary problem for each of the $r$ branches.
However, instead of using the whole training set, we only use instances with labels in the node's label subset.
Take the node of meta-label $\bf 12$ in Figure~\ref{fig:tree} as an example.
We only use instances having at least one of labels \{6, 7, 8, 9\} to train three binary problems, which correspond to the three children including meta-label $\bf 12$, label 8 and label 9.
The positive/negative instances of each binary problem are shown in Table~\ref{table:example-train-internal-node-of-metalabel-12}.
As our focus is on the time and space complexity for constructing a tree model, we omit discussing the details of the  prediction procedure.
Readers can check \citet{YP18a, SK20a,HFY22a} for details.

\section{Time and Space Analysis for Linear Methods}
\label{sec:complexity-analysis}
In this section, we compare both the training time and the model size for OVR and tree-based methods.
The training-time analysis clearly demonstrates why people favor tree-based methods over OVR, while the model-size discussion highlights the expensive space cost with tree-based methods.
Although these methods are well documented, our descriptions may be the first to discuss the complexity of the model size in detail.

Through this section, we follow \citet{YP18a} to assume the constructed label tree of depth $d$ is balanced, as shown in Figure~\ref{fig:K-ary-tree}.
That is, given the number of clusters $K$, we assume an ideal situation so that at each node, a clustering method splits its label subset to $K$ equally sized clusters.
By this design, a specified $d_\text{max}$ that is not too large to exhaust all labels results in a tree with depth $d=d_\text{max}$.
Thus, in our analysis of using the tree in Figure~\ref{fig:K-ary-tree}, only $d$ appears in the time and space complexity.

We further assume that each training instance $\bx_i$ has $\bar{n}$ non-zero elements on average.
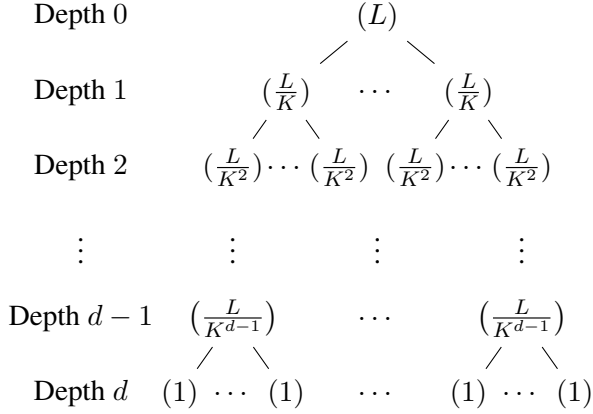
\begin{figure}[t]
    \begin{center}
    \begin{tikzpicture} 
      [level distance=1cm,
        level 1/.style={sibling distance=1.2cm},
        level 2/.style={sibling distance=0.7cm}]
    \node {$(L)$}
    child {node {$(\frac{L}{K})$} {
        child {node {$(\frac{L}{K^2})$} {
          child {node {$\vdots$} edge from parent[draw=none]
              child {node {$\left(\frac{L}{K^{d-1}}\right)$} edge from parent[draw=none]
                child {node {}edge from parent[draw=none]}
                child {node {$(1)$}}
                child {node {$\dots$}edge from parent[draw=none]}
                child {node {$(1)$}}
              child [grow=left] {node {} edge from parent[draw=none]
                  child [grow=left] {node {Depth $d-1$} edge from parent[draw=none]
                  child [grow=down] {node {Depth $d$} edge from parent[draw=none]}
                  child [grow=up] {node {$\vdots$} edge from parent[draw=none]
                    child [grow=up] {node {Depth $2$} edge from parent[draw=none]
                      child [grow=up] {node {Depth $1$} edge from parent[draw=none]
                        child [grow=up] {node {Depth $0$} edge from parent[draw=none]}
                      }
                    }
                  }
                  }
              }
            }
          }
        }
        }
        child {node {$\dots$} edge from parent [draw=none]}
        child {node {$(\frac{L}{K^2})$} {
            child {node {} edge from parent[draw=none]
              child {node {} edge from parent[draw=none]
                child {node {}edge from parent[draw=none]} 
              }
            }
          }
        }
      }
    }
    child {node {$\dots$} edge from parent [draw=none]
      child {node {} edge from parent [draw=none]}
      child {node {} edge from parent [draw=none]
                child {node {$\vdots$} edge from parent [draw=none]
                child {node {} edge from parent [draw=none]}
                child {node {$\dots$} edge from parent [draw=none]
                    child {node {$\dots$} edge from parent [draw=none]}}
                child {node {} edge from parent [draw=none]}
                }
            }
      child {node {} edge from parent [draw=none]}
    }
    child {node {$(\frac{L}{K})$} {
      child {node {$(\frac{L}{K^2})$} {
              child {node {} edge from parent[draw=none]
                child {node {} edge from parent[draw=none]
                  child {node {}edge from parent[draw=none]}
                }
              }
            }
          }
      child {node {$\dots$} edge from parent [draw=none]}
      child {node {$(\frac{L}{K^2})$} {
            child {node {$\vdots$} edge from parent[draw=none]
              child {node {$\left(\frac{L}{K^{d-1}}\right)$} edge from parent[draw=none]
                child {node {$(1)$}}
                child {node {$\dots$}edge from parent[draw=none]}
                child {node {$(1)$}}
              }
            }
          }
        }
      }
    }
    ;
    \end{tikzpicture}
    
    \end{center}
    \caption[Illustration for a depth $d$ balanced label tree with number of clusters $K$.]{\label{fig:K-ary-tree} Illustration for a depth $d$ balanced label tree with number of clusters $K$. The number within $(\cdot)$ at each node means the size of the node's label subset.
    Nodes from depth $0$ to depth $(d-2)$ all have $K$ children.
    Because the tree needs to be terminated at depth $d$, each node at depth $d-1$ has $L/K^{d-1}$ children.}
\end{figure}

Table~\ref{tbl:comparsion} gives a summary for the training time complexity and the model size.
We now explain each entry in detail.
\begin{table}[t]
\begin{center}
\begin{tabular}{c|c|c}
    Model & OVR & Balanced-tree\\
    \hline
    Time & $O(L\ell \bar{n})$ & Eq. \eqref{eq:time-complexity}\\
    \hline 
    \begin{tabular}{c} Model\\size \end{tabular} & $Ln$ & $\left(L+\dfrac{K^d-K}{K-1}\right)n$
\end{tabular}
\caption{\label{tbl:comparsion}A summary of the training time complexity and the model size for linear methods.}
\end{center}
\end{table}

\subsection{Time Analysis}
\label{subsec:time-analysis}
For training an OVR model, we use all instances $\{\bx_i\}_{i=1}^\ell$ to train each of the $L$ binary problems \eqref{eq:binary-relevance-loss}.
According to \citet{CJH08a, LG20a}, the complexity for solving a binary problem is
\begin{align}
  &O(\text{\# nonzero feature values in the training set})\nonumber\\
  \label{eq:binary-problem-time-complexity}
  &\times \text{\# iterations}.
\end{align}
Because the number of iterations is usually not large in practice, we may 
treat it as a constant in the complexity analysis.
Therefore, the time complexity for training an OVR model is
\begin{equation}
    \label{eq:time-for-OVR}
    O(L\ell \bar{n}).
\end{equation}

In contrast, training a tree model is very time-efficient because most binary problems only use part of the training set.
\citet{YP18a} give theoretical time complexity for the training procedure under reasonable assumptions.
However, they do not set the maximum depth in a tree as a termination condition, and their analysis lacks detailed explanation.
In Appendix~\ref{sec:time-analysis} we show that the time complexity for training a tree model of tree depth $d$ is
\begin{equation}
    \label{eq:time-complexity}
    O\left( \ell \bar{n} \log L \times \left( K(d-1) + \dfrac{L}{K^{d-1}} \right) \right).
\end{equation}
Based on \eqref{eq:time-complexity}, the time for training a tree of depth $d = \lceil\log_K L\rceil$ is
\begin{equation}
    \label{eq:lowest-time-complexity}
    O\left(K\ell \bar{n} \log^2 L\right).
\end{equation}

\subsection{Space Analysis}
\label{subsec:space-analysis}
To obtain the model size, we must compute the number of weight vectors in a model and discuss the needed space to store each weight vector.

We explain that in general, the solution of \eqref{eq:binary-relevance-loss} is dense; that is, most elements of $\bw_j$ are non-zeros.
This property is critical for our analysis.
We mentioned in Section~\ref{subsec:time-analysis} that problem \eqref{eq:binary-relevance-loss} is solved by iterative optimization algorithms, where each iteration often involves using the gradient for updating $\bw$.
For easy discussion, let us assume that both $\xi(\cdot)$ and $R_\lambda (\bw)$ are differentiable (e.g., logistic loss and $\ell_2$-regularization).
The gradient $\nabla f(\bw) \in \mathbb{R}^n$ is given by
\begin{equation}
    \label{eq:gradient-of-f}
    \sum_{i=1}^\ell \xi'(y_{ij}{\bw}^T\bx_i) y_{ij}\bx_i + \nabla_{\bw} R_\lambda(\bw).
\end{equation}
If the derivative $\xi'(\cdot)$ is non-zero, which is always the case for logistic loss, as long as a feature occurs in some instances, then the corresponding gradient component is likely non-zero.
The reason is that the sum of several non-zero values usually remains non-zero.
Therefore, regardless of the sparsity of the feature vectors $\{\bx_i\}_{i=1}^\ell$, we roughly have that
\begin{align}
\label{eq:crucial-property}
&\text{if a feature is used in the training set}\\
\Rightarrow \ &\text{corresponding component in $\bw$ is non-zero.}\nonumber
\end{align}
For our discussion, we assume that every feature in the training set (i.e., $\bx_i$, $\forall i$) is used.
This assumption is reasonable because one should remove unused features in the input data.

From the property in \eqref{eq:crucial-property} and our assumption that the training set has no unused features, for storing an OVR model we need to save
\begin{equation}
  \label{eq:space-for-OVR}
  Ln
\end{equation}
weight values for all $L$ weight vectors.

Next, we compute the number of weight vectors $\hat{L}$ in a tree model.
From the discussion in Section~\ref{subsec:tree-based-methods}, each node trains the same number of weight vectors as the node's children, so we have
\begin{align}
\hat{L}
&= \sum_{s\in \text{nodes}} \text{(\# child nodes of $s$)}\nonumber\\
&= \text{(\# nodes in the tree)} - 1\nonumber\\
&= \text{(\# leaf nodes)} + \text{(\# internal nodes)}-1\nonumber\\
&\label{eq:num-weight-vector} = L + \text{(\# meta-labels)}.
\end{align}
From \eqref{eq:num-weight-vector}, we see that training a tree model needs to afford additional storage for the weights of the meta-labels compared to OVR.
For a balanced tree in Figure~\ref{fig:K-ary-tree}, the number of meta-labels is
\begin{equation*}
  \sum_{i=1}^{d-1} K^i = \dfrac{K^d-K}{K-1}.
\end{equation*} 
So the total number of weight values we have to store for a balanced tree model is
\begin{equation}
  \label{eq:space-for-tree}
  \hat{L}n = \left(L+\dfrac{K^d-K}{K-1}\right)n.
\end{equation} 

\section{Techniques for Reducing Model-size and Their Issues}
\label{sec:technique-for-reducing-model-size}

Results in Section~\ref{subsec:space-analysis} indicate that the huge model size is problematic for both OVR and tree-based method in the XMC case.
For example, \citet{HFY22a} mentioned that the well-known Wiki-500k~\citep{KB16a} data set requires approximately 5TB space to store a linear OVR model, which is infeasible for a single computer.
In this section, we briefly review some techniques to reduce the model size and discuss their issues.

For OVR method, \citet{RB17a} use weight pruning to change small values to zero.
By storing only non-zero weights, this strategy effectively reduces the model size.
However, the model may behave differently since
it is fundamentally changed.
Also, this strategy brings other issues such as the selection of suitable pruning thresholds.
Other works \citep{IEY16a, IEHY17a} use L1-regularization to encourage sparse weight vectors without sacrificing performance.
However, according to the results provided in \citet{YP18a}, the slow training and prediction time for XMC problems is still not addressed.

On the other hand, tree-based methods, according to the results in Section~\ref{subsec:space-analysis}, have a larger model compared to OVR.
Since the model size for OVR is already not affordable, past works such as \citet{YP18a, SK20a, HFY22a} may directly assume that reducing the model size is a must.
Therefore, all of them perform weight pruning as in \citet{RB17a}.
However, in the following experiment we show that weight pruning in tree-based methods may cause a performance loss.

\begin{table}[t]
\begin{center}
\tabcolsep = 0.115cm  
\begin{tabular}{@{}c@{}|ccc|ccc@{}}
Pruning & P@1   & P@3   & P@5    & P@1       & P@3       & P@5       \\ \hline
        & \multicolumn{3}{c|}{EUR-Lex}  
        & \multicolumn{3}{c}{AmazonCat-13k} \\ \hline
No      & 82.12 & 68.90 & 57.72 & 92.96     & 79.21     & 64.43     \\
Yes     & 82.08 & 68.83 & 57.50 & 92.95     & 79.19     & 64.40     \\ \hline
        & \multicolumn{3}{c|}{Wiki10-31k}            
        & \multicolumn{3}{c}{Amazon-670k}   \\ \hline
No      & 84.49 & 74.37 & 65.55 & 44.11     & 38.94     & 35.09     \\
Yes     & 84.66 & 74.37 & 65.46 & 43.75     & 38.55     & 34.64    
\end{tabular}
\end{center}

    \caption[Precision scores of the tree model without and with weight pruning.]{Precision scores of the tree model without and with weight pruning. To construct the label tree, we use LibMultiLabel's default parameters $K = 100$ and $d_\text{max} = 6$.
    In general, the partitioning of most nodes stops before reaching $d_\text{max}$
    (For example, Amazon-670k has $L = 670,091$ labels,
    resulting in a balanced tree with a depth of $\log_{100} L \leq 3$.
    So if most nodes reach a depth of 10, the tree would be extremely imbalanced).
    We train each binary problem \eqref{eq:binary-relevance-loss} using squared hinge loss with $\ell_2$-regularization.}
    \label{table:prunning-hurt-performance}
\end{table}
We consider the four smaller data sets used in our experiments; see details in Section~\ref{sec:experiment-results}.
By using the package LibMultiLabel\footnote{\url{https://www.csie.ntu.edu.tw/~cjlin/libmultilabel/}} to conduct the training and prediction, we compare the test precision scores (P@\{1, 3, 5\}) without and with weight pruning in Table~\ref{table:prunning-hurt-performance}.
The threshold for pruning is $0.1$, so any weight within $[-0.1, 0.1]$ is changed to zero.
From Table~\ref{table:prunning-hurt-performance}, the performance drops in all data sets, though the score differences vary across data sets.
In particular, the loss is significant in Amazon-670k.
Note that we choose the $0.1$ threshold by following \citet{YP18a} and \citet{SK20a}.
Our results indicate the difficulty in choosing a suitable threshold.
In fact, the size of the tree model, discussed in Section~\ref{sec:inherent-pruning} and experimentally shown later in Figure~\ref{fig:ratio-fixed-K}, is very small and can be easily stored in one computer.

\section{Inherent Pruning in Tree-based Methods for Sparse Data}
\label{sec:inherent-pruning}
In this section, we explain that for sparse data, the number of weight values needed to be stored in a tree model can be much less than not only the huge value in \eqref{eq:space-for-tree}, but also $nL$, the number of weight values in an OVR model.
We stress that the model size reduction here is not achieved by applying any techniques. 
Instead, it can be regarded as an innate advantage of tree-based methods and we call such reduction ``inherent pruning.''

In the tree-based method, suppose we are training weight vectors at a tree node $u$.
Only a subset of all training instances (specifically, instances having any label in the label subset of $u$) are used.
Because the feature vectors are sparse, some features may have no values in the subset of training instances.
We can remove the unused features before training a multi-label model for the node.
Alternatively, if our optimization algorithm for each binary classification problem \eqref{eq:binary-relevance-loss} satisfies that
\begin{itemize}
  \item the initial $\bw$ is zero, and
  \item for unused features (i.e., feature value are zero across all instances), the corresponding $\bw$ components are never updated, 
\end{itemize}
then we can conveniently feed the subset of data into the optimization algorithm and get a $\bw$ vector with many zero elements.
This way, we keep all weight vectors in all nodes to have the same dimension $n$.
We can collect them as a large sparse matrix for easy use.

We use the name ``inherent pruning'' because the tree-based method itself ``prunes'' weight values for unused features by not updating the corresponding weight components during training.
Our survey shows that few works, except \citet{KJK20a}, mentioned the identity.
Even though, \citet{KJK20a} only briefly said that ``the weight sparsity increases with the depth of a tree ... implies a significant reduction of space'' without further discussion or experiments.


\subsection{Analysis on Balanced Trees}
\label{subsec:case-study}
We theoretically analyze the size of a tree model for sparse data under the following assumptions.

\begin{itemize}
  \item Our analysis follows Figure~\ref{fig:K-ary-tree} to have a $K$-ary balanced tree of depth $d$.
  \item As the tree depth grows, the training subset becomes smaller and the number of used features also reduces.
  Hence, when the number of labels is divided by $K$, we assume that the number of remaining features is multiplied by a ratio $\alpha \in (0, 1)$.
\end{itemize}
We list the information for each depth in Table~\ref{tbl:information-of-K-ary-tree}.

\begin{table}[t]
  \begin{center}
    \begin{tabular}{@{}cccc@{}} 
     Depth & \# nodes &\# children / node & \# features \\
     \hline
     0 & $K^0$ & $K$ & $n$ \\ 
     \hline
     1 & $K^{1}$ & $K$ & $\alpha n$ \\
     \hline
     \vdots & \vdots & \vdots & \vdots\\
     \hline
     $i$ & $K^{i}$ & $K$ & $\alpha^{i} n$\\
     \hline
     \vdots & \vdots & \vdots & \vdots\\
     \hline
     $d-1$ & $K^{d-1}$ & $L/K^{d-1}$ & $\alpha^{d-1} n$
    \end{tabular}
  \end{center}
  \caption{\label{tbl:information-of-K-ary-tree} Depth-wise summary of a tree model with depth $d$.}
\end{table}

Under the assumptions, we discuss the largest possible tree depth, denoted by $D$.
For a tree of depth $D$, because nodes at depth-$D$ cover all labels, each node at depth-$(D-1)$ must contain at least two labels; see the illustration in Figure~\ref{fig:K-ary-tree}.
Therefore, $D$ is the largest possible integer to satisfy
\begin{equation}
  \label{eq:lower-bound-on-L}
  \dfrac{L}{K^{D-1}} \ge 2.
\end{equation}
We then have
\begin{equation}
  \label{eq:def-of-D}
  D = \left\lfloor 1+\log_K \dfrac{L}{2} \right\rfloor.
\end{equation}
On the other hand, the minimum depth of a label tree is $d=2$.
Otherwise, if $d=1$, Table~\ref{tbl:information-of-K-ary-tree} shows that at depth-0, the root node has $L$ children, which is simply the OVR case.
Therefore, the range of the tree depth is $2 \le d\le D$.

We choose the OVR model with $Ln$ weight numbers as the comparison baseline because an OVR model is more space-efficient than a tree model with a dense weight matrix, which takes a space of $\hat{L}n$ weight values in \eqref{eq:space-for-tree}.
Then, we compute the number of non-zero weights in a tree model
\begin{align}
  \phantom{=}&\sum_{i=0}^{d-2} (K^{i})(K)(\alpha^{i} n) + K^{d-1}\left(\dfrac{L}{K^{d-1}}\right) (\alpha^{d-1}n)\nonumber\\
  \label{eq:balanced-tree-nnz}
  &= Kn \cdot \dfrac{(K\alpha)^{d-1}-1}{K\alpha -1} + L\alpha^{d-1}n.
\end{align}
A minor issue is that to have \eqref{eq:balanced-tree-nnz} well defined, we need $K\alpha \neq 1$.
We discuss this exceptional situation in Appendix~\ref{subsec:exceptional-case}.

We compare \eqref{eq:balanced-tree-nnz} with the number of non-zeros in an OVR model by the following ratio:
\begin{equation}
  \label{eq:balanced-nnz-ratio}
  \dfrac{\text{\eqref{eq:balanced-tree-nnz}}}{Ln}
  = \dfrac{K((K\alpha)^{d-1}-1)}{L(K\alpha -1)}+\alpha^{d-1}.
\end{equation}
The following theorem illustrates that the tree model generally contains less non-zero weight values than the OVR model. We give the proof in Appendix~\ref{sec:proof-of-theorems}.
\begin{theorem}
  \label{thm:ratio-smaller-than-1}
  Consider $2 < d \le D$ and assume $K \ge 4$.
  Let $\alpha^*$ be the unique solution in $(0, 1)$ of the equation
  \begin{equation}
    \alpha^{d-2}(K^{d-D} + \alpha) - 1 = 0.
  \end{equation}
  If $\alpha < \max\{2/K, \alpha^*\}$, then the ratio \eqref{eq:balanced-nnz-ratio} is smaller than one.
\end{theorem}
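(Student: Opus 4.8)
The plan is to treat the ratio \eqref{eq:balanced-nnz-ratio} as a function $r(\alpha)$ of $\alpha$ and exploit its monotonicity, so that $r(\alpha)<1$ for all $\alpha$ below the threshold follows from checking $r$ at the threshold itself. First I would rewrite $r$ in the geometric-sum form $r(\alpha)=\tfrac{K}{L}\sum_{i=0}^{d-2}(K\alpha)^i+\alpha^{d-1}$, which coincides with \eqref{eq:balanced-nnz-ratio} whenever $K\alpha\neq 1$ and is manifestly strictly increasing on $(0,1)$ since every summand is nondecreasing in $\alpha$. Hence it suffices to prove $r(M)\le 1$ for $M=\max\{2/K,\alpha^*\}$; strict monotonicity then upgrades this to $r(\alpha)<r(M)\le 1$ for every $\alpha<M$. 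The second preliminary ingredient is the bound $L\ge 2K^{D-1}$, which is exactly \eqref{eq:lower-bound-on-L} rearranged, together with $d\le D$; these let me replace $K/L$ by $\tfrac{1}{2}K^{2-D}$ in all estimates. I would also record as a sanity check that $\alpha^*$ is well defined: the function $g(\alpha)=\alpha^{d-2}(K^{d-D}+\alpha)-1$ satisfies $g(0)=-1<0$, $g(1)=K^{d-D}>0$ (using $d\le D$), and $g'>0$ on $(0,1)$ for $d\ge 3$, so a unique root $\alpha^*\in(0,1)$ exists by the intermediate value theorem.

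I would then split on which term attains the maximum $M$. In the case $M=2/K$ (so $\alpha^*\le 2/K$), I substitute $K\alpha=2$ to get $r(2/K)=\tfrac{K(2^{d-1}-1)}{L}+(2/K)^{d-1}$, bound $K/L\le 1/(2K^{d-1})$ via $L\ge 2K^{D-1}\ge 2K^{d-1}$, and collect terms as $r(2/K)\le K^{-(d-2)}\bigl(\tfrac{2^{d-1}-1}{2}+\tfrac{2^{d-1}}{K}\bigr)$. Using $K\ge 4$ to bound the bracket by $3\cdot 2^{d-3}$ and $K^{d-2}\ge 2^{2d-4}$ then gives $r(2/K)<3/2^{d-1}\le 3/4$ for $d\ge 3$, which is the desired strict inequality.

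In the remaining case $M=\alpha^*>2/K$, I would use the defining equation of $\alpha^*$ — namely $(\alpha^*)^{d-2}K^{d-D}+(\alpha^*)^{d-1}=1$ — to replace $1-(\alpha^*)^{d-1}$ by $K^{d-D}(\alpha^*)^{d-2}$. Writing $y=K\alpha^*$ (note $y>2$ here), the inequality $r(\alpha^*)\le 1$ becomes, after inserting $L\ge 2K^{D-1}$ and clearing the common powers of $K$, the clean polynomial statement $\tfrac{y^{d-1}-1}{y-1}\le 2y^{d-2}$. Since $\tfrac{y^{d-1}-1}{y-1}=\sum_{i=0}^{d-2}y^i=y^{d-2}+\sum_{i=0}^{d-3}y^i$ and $\sum_{i=0}^{d-3}y^i=\tfrac{y^{d-2}-1}{y-1}<y^{d-2}$ for $y\ge 2$, the claim follows immediately.

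This reduction is the step I expect to be the main obstacle: recognizing that the defining equation is precisely engineered to turn $1-\alpha^{d-1}$ into the single power $K^{d-D}\alpha^{d-2}$, which is what makes the final polynomial inequality both true and as tight as the sign of $K\alpha^*-2$ allows. A last detail to dispatch is that all evaluations occur at $K\alpha\in\{2\}\cup(2,\infty)$, so the excluded case $K\alpha=1$ of \eqref{eq:balanced-nnz-ratio} never arises and the sum form I work with is legitimate throughout.
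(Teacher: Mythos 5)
Your proof is correct and takes essentially the same route as the paper's: both rest on $L \ge 2K^{D-1}$, the case split at $\alpha = 2/K$, the algebra with $K\ge 4$ in the small-$\alpha$ case, and the bound $\sum_{i=0}^{d-2} y^i \le 2y^{d-2}$ for $y = K\alpha \ge 2$, which converts the ratio into $\alpha^{d-2}(K^{d-D}+\alpha)$ and hence into the defining equation of $\alpha^*$ --- the only cosmetic difference being that you evaluate the increasing function $r$ at the threshold while the paper bounds it uniformly below the threshold. (Your intermediate claim $K/L \le 1/(2K^{d-1})$ should read $1/(2K^{d-2})$, but the displayed estimate that follows already uses the correct value, so nothing breaks.)
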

In Theorem~\ref{thm:ratio-smaller-than-1}, we consider $d > 2$ because for $d=2$, the obtained bound on $\alpha$ is in a slightly different form; see details in Appendix~\ref{sec:proof-of-theorems}.
\par Although Theorem~\ref{thm:ratio-smaller-than-1} imposes an upper bound on $\alpha$, we show that the bound is in general close to one.
For example, if $L=2\cdot 10^8$ and $K=100$, we have $D=5$ according to \eqref{eq:def-of-D}.
Then we only need $\alpha < 0.999$ for a depth 2 or 3 tree, $\alpha < 0.996$ for a depth 4 tree and $\alpha < 0.819$ for a depth 5 tree.
Therefore, even if the number of used features is only minorly reduced after each label division, we can significantly lower the size of a tree model from \eqref{eq:space-for-tree} to be smaller than that of OVR.

The following theorem shows that a deeper tree leads to a smaller model, with its proof given in Appendix~\ref{sec:proof-of-theorems}.
\begin{theorem}
  \label{thm:ratio-decreasing-in-d}
  If $\alpha < 1 - 1/(2K)$, the ratio \eqref{eq:balanced-nnz-ratio} is decreasing in $d$ for $2 \le d \le D-2$.
  Specifically, for a tree with depth $d$ within this range, the ratio is smaller than that of a tree of depth $d+1$.
\end{theorem}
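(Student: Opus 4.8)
The plan is to reduce the monotonicity claim to a single clean inequality by forming the difference $r(d) - r(d+1)$, where $r(d)$ denotes the ratio in \eqref{eq:balanced-nnz-ratio} viewed as a function of the depth $d$. First I would rewrite the first summand of $r(d)$ as a geometric sum: since $\frac{(K\alpha)^{d-1}-1}{K\alpha-1} = \sum_{j=0}^{d-2}(K\alpha)^j$, we have $r(d) = \frac{K}{L}\sum_{j=0}^{d-2}(K\alpha)^j + \alpha^{d-1}$. This form is convenient because incrementing $d$ to $d+1$ simply appends one term $(K\alpha)^{d-1}$ to the sum and replaces $\alpha^{d-1}$ by $\alpha^{d}$, which makes the telescoping transparent and avoids dealing separately with the cases $K\alpha<1$ and $K\alpha>1$.

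Next I would compute the difference directly. The geometric-sum part contributes $-\frac{K}{L}(K\alpha)^{d-1} = -\frac{K^d}{L}\alpha^{d-1}$, while the power part contributes $\alpha^{d-1}-\alpha^{d} = \alpha^{d-1}(1-\alpha)$. Collecting terms gives
\begin{equation*}
  r(d) - r(d+1) = \alpha^{d-1}\left[(1-\alpha) - \frac{K^d}{L}\right].
\end{equation*}
Since $\alpha^{d-1} > 0$, the ratio is decreasing in $d$ (i.e.\ $r(d) > r(d+1)$, so a depth-$(d+1)$ tree has a strictly smaller ratio than a depth-$d$ tree, consistent with ``a deeper tree leads to a smaller model'') exactly when the bracket is positive, that is, when $\alpha < 1 - K^d/L$.

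It then remains to show that the hypothesis $\alpha < 1 - 1/(2K)$ together with $d \le D-2$ forces $\alpha < 1 - K^d/L$, which is where the definition of $D$ enters. By \eqref{eq:lower-bound-on-L} we have $L/K^{D-1} \ge 2$, equivalently $K^{D-1}/L \le 1/2$. For any $d \le D-2$ this yields $\frac{K^d}{L} \le \frac{K^{D-2}}{L} = \frac{1}{K}\cdot\frac{K^{D-1}}{L} \le \frac{1}{2K}$, hence $1 - K^d/L \ge 1 - 1/(2K)$. Therefore $\alpha < 1 - 1/(2K) \le 1 - K^d/L$, the bracket above is positive, and $r(d) > r(d+1)$ for every $d$ in the range $2 \le d \le D-2$.

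I expect the computation itself to be routine once the geometric-sum rewriting is in place; the one genuinely load-bearing step---and the reason the statement restricts to $d \le D-2$ rather than $d \le D-1$---is the uniform bound $K^d/L \le 1/(2K)$. Pushing $d$ up to $D-1$ would only give $K^{D-1}/L \le 1/2$, too weak to be dominated by the clean threshold $1 - 1/(2K)$; the extra factor of $K$ gained by stopping at $D-2$ is precisely what lets a single hypothesis on $\alpha$ cover the whole range. I would also double-check the edge case $d=2$, where the geometric sum reduces to the single term $j=0$, to confirm the difference formula remains valid there.
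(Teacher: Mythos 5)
Your proof is correct and follows essentially the same route as the paper's: both form the difference between consecutive depths, reduce it to $\alpha^{d-1}\bigl((1-\alpha) - K^d/L\bigr)$ (the paper works with the unnormalized counts, giving $\alpha^{d-1}n\bigl(L(1-\alpha)-K^d\bigr)$), and then invoke $L \ge 2K^{D-1}$ together with $d \le D-2$ to show the bracket is positive under $\alpha < 1 - 1/(2K)$. Your observation that the geometric-sum form sidesteps the $K\alpha = 1$ degeneracy is a nice touch but does not change the substance of the argument.
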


\begin{figure}[t]
  \centering
  \includegraphics[width=0.4\textwidth]{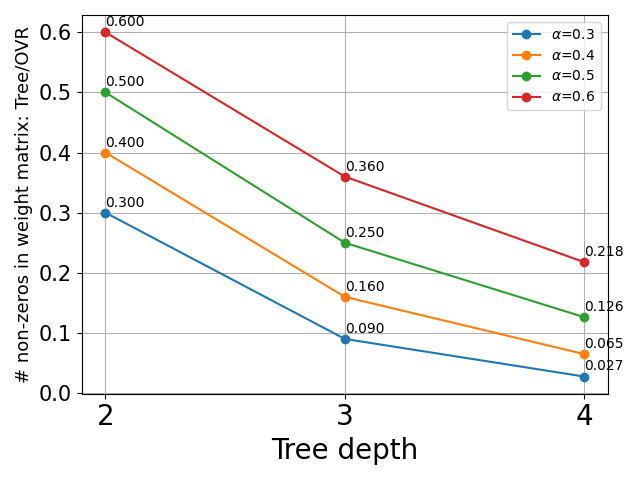}
  \caption[The ratio of number of non-zeros between a tree model and an OVR model.]{\label{fig:fixed-K-with-different-alpha}The ratio of number of non-zeros between a tree model and an OVR model, calculated based on \eqref{eq:balanced-nnz-ratio}. We show the cases for $\alpha=\{0.3, 0.4, 0.5, 0.6\}$.}
\end{figure}

As an illustration of Theorem~\ref{thm:ratio-decreasing-in-d}, we plot the ratio \eqref{eq:balanced-nnz-ratio} with $L = 2\cdot 10^8$, $K=100$ and different $\alpha$'s in Figure~\ref{fig:fixed-K-with-different-alpha}.
For this illustration, $D=5$, so Theorem~\ref{thm:ratio-decreasing-in-d} is applicable for $2\le d \le 3$.
In Figure~\ref{fig:fixed-K-with-different-alpha}, we see that the ratio reduces as the tree grows from $d=2$ to $4$, regardless of the value of $\alpha$.\footnote{See more discussion in Appendix~\ref{sec:comments-on-decreasing-theorem}.}
Later in Section~\ref{subsec:results-and-discussion}, we shall see that the experiments on real-world data align with our theoretical analysis on balanced trees.
\section{Experimental Results}
\label{sec:experiment-results}
\begin{table}[t]
    \centering
    \vspace*{-\baselineskip}
    \begin{tabular}{@{}l@{\hskip 5pt}|@{\hskip 2pt}r@{\hskip 2pt}|@{\hskip 2pt}r@{\hskip 2pt}|@{\hskip 2pt}r@{}}
      \multirow{2}{*}{Data set} & \#training    & \#features & \#labels \\
       & data\quad\  $l$ & $n$ & $L$ \\
      \hline
      EUR-Lex       & 15,449    & 186,104   & 3,956     \\
      AmazonCat-13k & 1,186,239 & 203,882   & 13,330    \\
      Wiki10-31k    & 14,146    & 104,374   & 30,938    \\
      Wiki-500k     & 1,779,881 & 2,381,304 & 501,070   \\
      Amazon-670k   & 490,449   & 135,909   & 670,091   \\
      Amazon-3m     & 1,717,899 & 337,067   & 2,812,281 
    \end{tabular}
  \caption[The statistics of extreme multi-label data sets.]{\label{tbl:dataset-stat} The statistics of extreme multi-label data sets, ordered by the number of labels. Wiki-500k and Amazon-3m are downloaded from the GitHub repository provided in \citet{RY19a}; others are from ``LIBSVM Data: Multi-label Classification.''\footnotemark{} More details are in Appendix~\ref{sec:exp-settings}.}
\end{table}
\footnotetext{\url{https://www.csie.ntu.edu.tw/\~cjlin/libsvmtools/datasets/multilabel.html}}

In this section, we compare the model size of OVR and tree-based methods across several extreme multi-label text data sets.
The statistics for these data sets are listed in Table~\ref{tbl:dataset-stat}.

\begin{figure*}[t]
  \centering
  \begin{subfigure}[h]{0.3\textwidth}
      \centering
      \includegraphics[width=\textwidth]{
        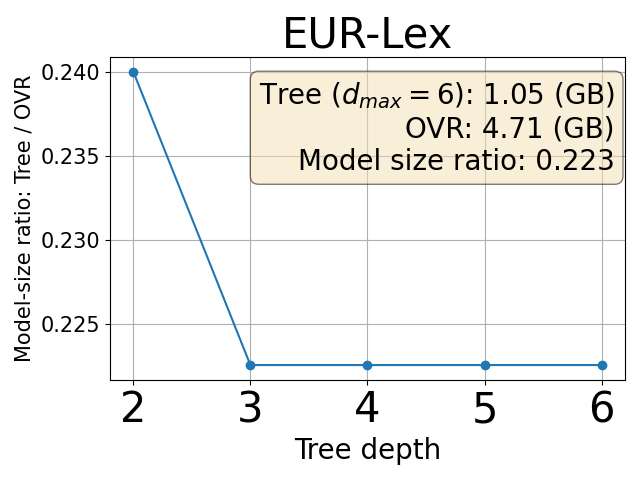}
  \end{subfigure}
  \begin{subfigure}[h]{0.3\textwidth}
      \centering
      \includegraphics[width=\textwidth]{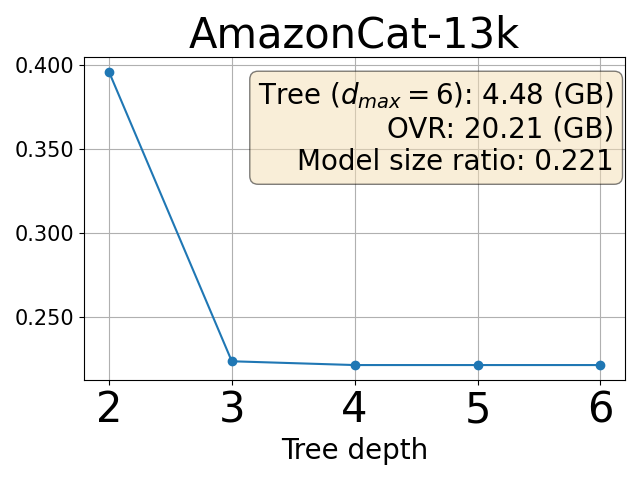}
  \end{subfigure}
  \begin{subfigure}[h]{0.3\textwidth}
      \centering
      \includegraphics[width=\textwidth]{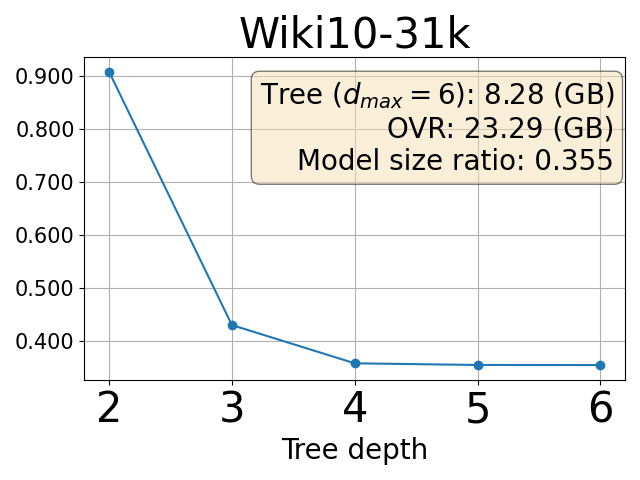}
  \end{subfigure}
  \begin{subfigure}[h]{0.3\textwidth}
      \centering
      \includegraphics[width=\textwidth]{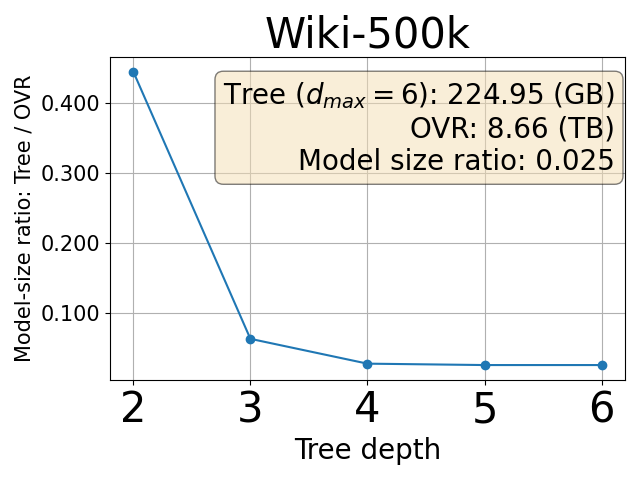}
  \end{subfigure}
  \begin{subfigure}[h]{0.3\textwidth}
    \centering
    \includegraphics[width=\textwidth]{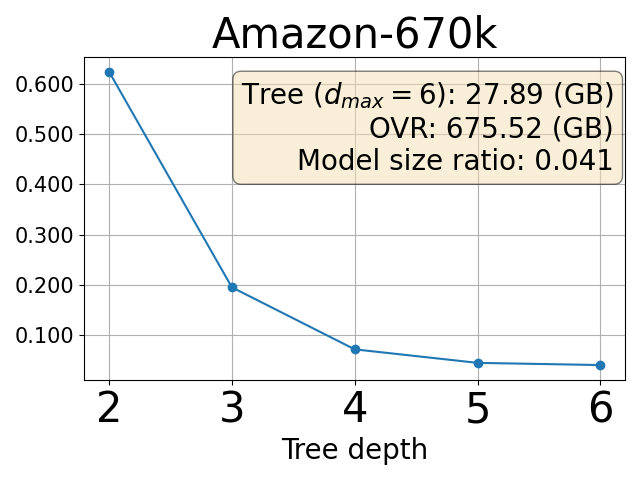}
  \end{subfigure}
    \begin{subfigure}[h]{0.3\textwidth}
    \centering
    \includegraphics[width=\textwidth]{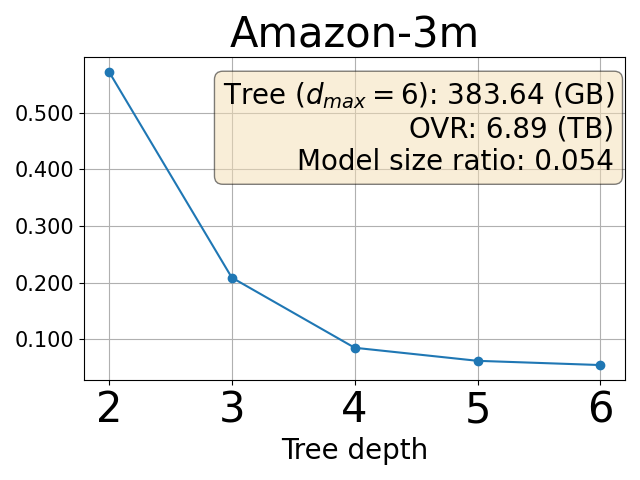}
  \end{subfigure}
  \caption[The ratio between the tree model size and the OVR model size under the {\sf fixed-K} setting.]{\label{fig:ratio-fixed-K} The ratio between the tree model size and the OVR model size with $K=100$ and various $d_\text{max}$. The box in each sub-figure shows the actual model size of tree/OVR models under $d_\text{max} = 6$ and the ratio between the two.}
\end{figure*}

\begin{figure*}[t]
  \centering
  \includegraphics[width=\textwidth]{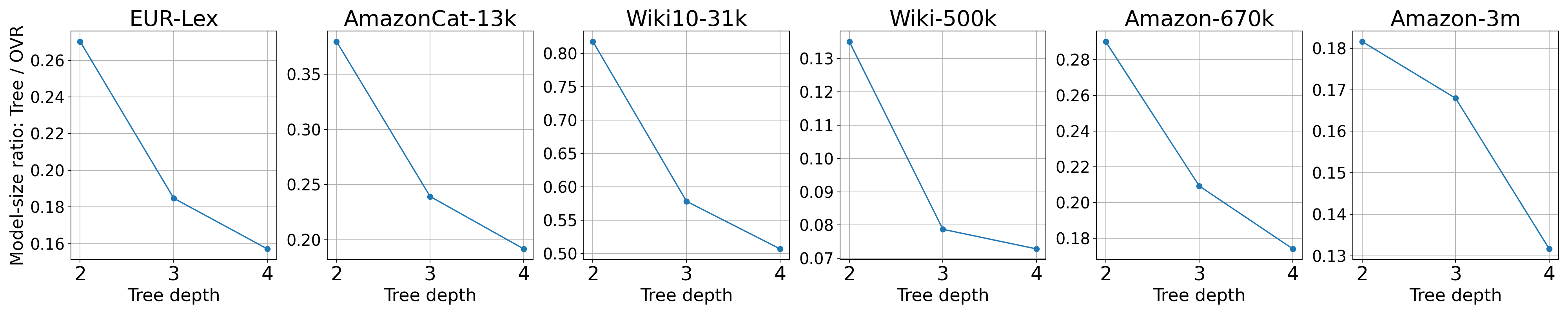}
  \caption[The ratio between the tree model size and the OVR model size under the {\sf varied-K} setting.]{\label{fig:ratio-varied-K} The ratio between the tree model size and the OVR model size with $d_\text{max} = \{2,3,4\}$ and $K=\lceil L^{1/d_\text{max}}\rceil$.}
\end{figure*}

\subsection{Experimental Settings}
For the tree model to be compared, the constructed tree should have high prediction performances; otherwise, claiming that a low-performing tree saves space would be meaningless.
To find out settings with high performances, we must conduct a hyper-parameter search on the number of clusters $K$ and the maximum tree depth $d_{\text{max}}$, etc.
Since developing an effective search procedure is out of the scope of this work, we instead consider tree structures that have been investigated in \citet{SK20a} due to their high performances among almost all data sets.
Specifically, we calculate the model size for the following cases:
\begin{itemize}
  \item {\sf fixed-K}: 
  In \citet{SK20a} they suggested wide trees ($K \ge 100$).
  Therefore, we fix $K=100$ for all data sets and check the model size with $d_\text{max} \in \{2,3,4,5,6\}$.
  This setting can be regarded as an empirical validation for the study in Section~\ref{subsec:case-study}.
  \item {\sf varied-K}:
  In contrast to a fixed $K$, we vary it according to the specified $d_\text{max}$.
  Specifically, we set $K = \lceil L^{1/d_{\text{max}}}\rceil$ by considering $d_\text{max} \in \{2, 3, 4\}$. 
  We do not consider larger $d_\text{max}$ because under this setting of choosing $K$, the performance \citep{SK20a} of deeper trees is poor.
\end{itemize}
We use the software LibMultiLabel\footnote{\url{https://www.csie.ntu.edu.tw/~cjlin/libmultilabel/}} to conduct the experiment.
Currently LibMultiLabel uses K-means algorithm \citep{CE03a} implemented in the package scikit-learn\footnote{\url{https://scikit-learn.org}} for partitioning.
Since the K-means algorithm involves a random selection of $K$ centroids, we conduct the experiments five times on all data sets using different seeds.
More details of our experimental settings are in Appendix~\ref{sec:exp-settings}.

\subsection{Estimating Model Size Prior to Training}
\label{subsec:estimating-model-size}

We explain that the size of a tree-based model can be estimated before training any binary classifiers.
Based on \eqref{eq:crucial-property}, a tight upper bound on the true model size is by summing up each binary problem's used features.
Suppose the weights are stored as double-precision floating-point numbers.
For an OVR model, we have
\begin{equation}
  \label{eq:OVR-nnz}
  nL \times 8 \text{ bytes}
\end{equation}
as the estimation of the model size.
For the tree model, the number of non-zero weights is bounded by
\begin{equation}
  \label{eq:tree-nnz}
  \sum_{u \in \text{nodes}} \text{\# children of $u$}\cdot \text{ \# used features of $u$}.
\end{equation}
However, we also need to store the index of each used feature. 
If we assume a four-byte integer storage for the index, then the model size for a tree model is roughly
\begin{equation}
  \text{\eqref{eq:tree-nnz}} \times 12 \text{ bytes},
\end{equation}
and the ratio of a tree-based model to an OVR model is
\begin{equation}
  \label{eq:model-size-ratio}
  \dfrac{\eqref{eq:tree-nnz}\times 1.5}{nL}.
\end{equation}

\subsection{Empirical Analysis on the Model Size}
\label{subsec:results-and-discussion}
Figure~\ref{fig:ratio-fixed-K} shows the relative size of a tree model compared to OVR under the {\sf fixed-K} setting.
Besides, in a separate box of each sub-figure, we give the actual model size of an OVR model and a tree model with $d_\text{max} = 6$.
We find that the memory consumption is indeed acceptable for a single computer.
For smaller data sets, the tree model size is around 20 to 40\% of the OVR model, while for large data sets, the ratio is lower than 10\%.
Our results fully support the space efficiency of tree-based methods on sparse data.
Moreover, as $d_\text{max}$ grows in the early stage, the ratio \eqref{eq:model-size-ratio} significantly drops.
This observation is consistent with our analysis in Section~\ref{subsec:case-study}.
However, if we further grow $d_\text{max}$, the model size may not change much.
The reason is that the partitioning finishes before reaching the maximum depth, causing the actual tree depth $d$ being smaller than $d_\text{max}$.

For the {\sf varied-K} setting, the relative size between the two models is presented in Figure~\ref{fig:ratio-varied-K}. 
The model size still keeps decreasing as the tree becomes deeper.
A comparison between Figure~\ref{fig:ratio-fixed-K} and Figure~\ref{fig:ratio-varied-K} shows that, under the same $d_\text{max}$, in general a larger $K$ leads to a smaller model.
For example, if $d_\text{max}=4$, the setting of $K = \lceil L^{1/d_{\text{max}}}\rceil$ of {\sf varied-K} leads to $K\le 100$ (as our largest $L$ is less than $10^8$); i.e., smaller than {\sf fixed-K}.
The ratios in Figure~\ref{fig:ratio-fixed-K}, especially for the larger four sets, are clearly smaller than those in Figure~\ref{fig:ratio-varied-K}.
Although a larger $K$ brings more binary problems to train in a tree model, it also leads to more unused features after a label division.
Apparently, within the scope of our experimental settings, the increase of unused features has a higher influence on the model size than the more binary problems.

\subsection{Empirical Study on the Reduction Rate $\alpha$}
In Section~\ref{subsec:case-study}, for balanced trees, we assume that the number of used features is multiplied by $\alpha \in (0, 1)$ when the number of labels is divided by $K$.
In Appendix~\ref{sec:empirical-alpha}, we empirically study the reduction of used features for unbalanced label trees on real data.
\section{Conclusions}
\label{sec:conclusions}
In this work, we identify that the many unused features are the main reason for the space-efficiency of tree-based methods under sparse data conditions.
Our findings indicate that, for large data sets, the size of a tree model can be reduced to just 10\% of the size of an OVR model.
In practice, one can first calculate the tree model size as soon as the label tree is constructed.
By doing so, we can check whether the model size exceeds the available memory before training any binary problem.
This approach avoids directly changing the trained weights such as pruning, which carries the risks of compromising the performance.

\clearpage
\section*{Limitations}
Though we can estimate the model size for a tree-based model before training the model, it may be still time-consuming to generate the constructed label tree because K-means algorithm takes considerable time.
Besides, analyzing the tree model size using a non-constant feature reduction rate (depending on the number of clusters $K$ and the depth $d$) could be a possible direction.
\section*{Acknowledgments}
This work was supported in part by National Science and Technology Council of Taiwan grant 110-2221-E-002-115-MY3.

\bibliography{sdp/sdp}

\pagebreak
\clearpage
\appendix
\section{Time Analysis on Tree Models}
\label{sec:time-analysis}
We explain the time complexity \eqref{eq:time-complexity} for constructing a balanced-tree model with tree depth $d$.
As shown in Figure~\ref{fig:K-ary-tree}, each node from depth-$0$ to depth-$(d-2)$
contains $K$ children, and each node at depth-$(d-1)$ has at most $\lceil L/K^{d-1}\rceil$ children.
Our assumptions on the training data are based on \citet{YP18a}.
We assume that
\begin{itemize}
  \item each training instance $\bx_i$ has $\bar{n}$ non-zero elements on average, and
  \item the average number of relevant labels for each instance is bounded by
  $c\log L$, where $c$ is a constant.
\end{itemize}

As in Section~\ref{subsec:tree-based-methods}, there are three parts to construct a tree model.
\begin{enumerate}[1.]
    \item Computing the label representations as in \eqref{eq:label-representation} costs $O(l \bar{n} \log L)$-time.
    \item To build a label tree,
    K-means clustering \citep{JM67a} is used to
    recursively partition the labels.
    The K-means algorithm has several iterations.
    For each iteration, one need to calculate the distance from all label representations
    to the center of each cluster.
    So learning K-means clustering from depth-$1$ to depth-$(d-1)$ costs
    \begin{equation*}
        O(\text{nnz}(V)\times K \times \# \text{iterations}\times d)\text{-time},
    \end{equation*}
    where $V$ is the label representation matrix.
    \item The last part is to train classifiers for the tree nodes.
    According to \citet{CJH08a, LG20a}, the complexity for solving a binary problem is \eqref{eq:binary-problem-time-complexity}.
    Because the number of iterations is usually not large in practice, we may 
    treat it as a constant in the complexity analysis.
    Then, we compute the training complexity for each depth-$0$ to depth-$(d-1)$.
    \begin{itemize}
      \item Depth-$0$: We have to train $K$ classifiers.
      For each binary problem we use all $\ell$ training instances.
      Therefore, solving $K$ binary problems costs
      \begin{equation}
        \label{eq:time-for-training-root}
        O(K\ell \bar{n}).
      \end{equation}
      \item Depth-$1$ to depth-$(d-2)$: At depth-$i$ there are $K^{i}$ nodes.
      The corresponding $K^i$ label subsets of the nodes form a partition of all $L$ labels.
      Since we assume that each instance $\bx_i$ has less than $c\log L$ labels on average, $\bx_i$ is used in no more than $c\log L$ nodes.
      Therefore, by summing the number of used training instances in the $K^{i}$ nodes, we get a total of $\ell c\log L$.
      Finally, for each node we have $K$ binary problems to train, so the time complexity for training all $K^{i}$ nodes ($K^{i+1}$ binary problems in total) is
      \begin{equation}
        \label{eq:time-for-training-internal-nodes}
        O(K\ell c(\log L) \bar{n}).
      \end{equation}
      \item Depth-$(d-1)$: This is similar to the previous case.
      The only difference is that for each node we have $\lceil L/K^{d-1}\rceil$ problems to solve (instead of $K$), which leads to a complexity of
      \begin{equation}
        \label{eq:time-for-training-leafs}
        O\left(\dfrac{L}{K^{d-1}}\ell c(\log L) \bar{n}\right).
      \end{equation}
    \end{itemize}

    Therefore, the time complexity for training is
    \begin{equation*}
      \text{\eqref{eq:time-for-training-root}} + (d-2)\cdot \text{\eqref{eq:time-for-training-internal-nodes}} + \text{\eqref{eq:time-for-training-leafs}} = \text{\eqref{eq:time-complexity}}.
    \end{equation*}
    By the inequality of arithmetic and geometric means, the inner term in \eqref{eq:time-complexity} can be written as
    \begin{equation}
        \label{eq:AM-GM-inequality}
        \underbrace{K+\cdots + K}_\text{$(d-1)$ times} +\frac{L}{K^{d-1}}
        \geq d \sqrt[d]{L}.
    \end{equation}
    The equality in \eqref{eq:AM-GM-inequality} holds when
    \begin{equation*}
        K = \frac{L}{K^{d-1}},
    \end{equation*}
    which means $d = \log_K L$.
    Under this value of $d$, the complexity is as in \eqref{eq:lowest-time-complexity}.
\end{enumerate}


\clearpage
\section{Proof of Theorems}
\label{sec:proof-of-theorems}

\subsection{Proof of Theorem~\ref{thm:ratio-smaller-than-1}}
\label{subsec:proof-of-thm-1}
Here we give a full version of Theorem~\ref{thm:ratio-smaller-than-1} by including the case of $d=2$.
\begin{reptheorem}{thm:ratio-smaller-than-1}
  \label{thm:ratio-smaller-than-1-full-version}
  The ratio \eqref{eq:balanced-nnz-ratio} is smaller than one if any of the following conditions holds.
  \begin{enumerate}[(i)]
    \item \label{cond-1-full-version} $d=2$ and 
    \begin{equation}
      \label{eq:alpha-bound-for-depth-2}
      \alpha < 1 - 1/(2K^{D-2}).
    \end{equation}
    \item \label{cond-2-full-version} $d>2$, $K \ge 4$ and $\alpha < \max\{2/K, \alpha^*\}$, where $\alpha^*$ is the unique solution in $(0, 1)$ of the equation
    \begin{equation}
      \alpha^{d-2}(K^{d-D} + \alpha) - 1 = 0.
    \end{equation}
  \end{enumerate}
\end{reptheorem}  

\begin{proof}[Proof of Theorem~\ref{thm:ratio-smaller-than-1}]
If $d=2$, we have
\begin{align}
  \label{eq:proof-for-depth-2}
  \text{\eqref{eq:balanced-nnz-ratio}}
  = \dfrac{K}{L} + \alpha \le \dfrac{K}{2K^{D-1}} + \alpha
  < 1,
\end{align}
where the inequalities follow from \eqref{eq:lower-bound-on-L} and \eqref{eq:alpha-bound-for-depth-2}.

If $d> 2$, from the condition 
\begin{equation}
  \label{eq:condition-on-alpha}
  \alpha < \max\{2/K, \alpha^*\},
\end{equation}
we consider two cases.
\begin{enumerate}[a.]
  \item $\alpha < 2/K$: 
  In this case, $\alpha$ of course satisfies \eqref{eq:condition-on-alpha}.
  We see that the number of non-zeros in a tree model \eqref{eq:balanced-tree-nnz} is increasing in $\alpha$. 
  Therefore, by $\alpha < 2/K$ we have
    \begin{align}
      \text{\eqref{eq:balanced-nnz-ratio}}
      &< \dfrac{K (K(2/K))^{d-1}}{L(K(2/K)-1)} + \left(\dfrac{2}{K}\right)^{d-1}\nonumber\\
      &\le \label{eq:tmp1-in-small-alpha} \dfrac{2^{d-2}}{K^{D-2}} + \left(\dfrac{2}{K}\right)^{d-1}\\
      &= \left(\dfrac{2}{K}\right)^{d-2}\left( \dfrac{1}{K^{D-d}} + \dfrac{2}{K} \right)\nonumber\\
      &\le \label{eq:tmp2-in-small-alpha} \left(\dfrac{2}{4}\right)^{3-2}\left( 1 + \dfrac{2}{4} \right)
      < 1,
    \end{align}
    where \eqref{eq:tmp1-in-small-alpha} is from \eqref{eq:lower-bound-on-L} and \eqref{eq:tmp2-in-small-alpha} follows from $K \ge 4$ and $2 < d \le D$.
  \item We consider $\alpha$ satisfying \eqref{eq:condition-on-alpha} but not in the previous case.
  We must have 
  \begin{equation*}
    2/K \le \alpha < \alpha^*.
  \end{equation*}
  The condition on $\alpha$ implies that $K\alpha \ge 2$ and thus
  \begin{equation}
    \label{eq:tmp1-in-big-alpha}
    \begin{split}
    \dfrac{K\alpha}{K\alpha-1} &= 1+\dfrac{1}{K\alpha-1} \\
    &\le 1 + \dfrac{1}{2-1} = 2.
    \end{split}
  \end{equation}
  Then, 
    \begin{align}
      \text{\eqref{eq:balanced-nnz-ratio}}
      &\le \label{eq:tmp2-in-big-alpha} \dfrac{K(K\alpha)^{d-1}}{2K^{D-1}(K\alpha-1)} + \alpha^{d-1}\\
      &= \dfrac{K\alpha}{K\alpha-1}\dfrac{K(K\alpha)^{d-2}}{2K^{D-1}} + \alpha^{d-1}\nonumber\\
      &\le \label{eq:tmp3-in-big-alpha} 2\cdot \dfrac{(K\alpha)^{d-2}}{2K^{D-2}} + \alpha^{d-1}\\
      &= \label{eq:upper-bound-case-2} \alpha^{d-2}(K^{d-D} + \alpha),
    \end{align}
  where \eqref{eq:tmp2-in-big-alpha} follows from \eqref{eq:lower-bound-on-L} and \eqref{eq:tmp3-in-big-alpha} is from \eqref{eq:tmp1-in-big-alpha}.
  Consider the function
  \begin{equation*}
    f(\alpha) = \alpha^{d-2}(K^{d-D} + \alpha) - 1.
  \end{equation*}
  Clearly, $f(\alpha)$ is strictly increasing in $[0, 1]$, $f(0) = -1$, and $f(1) > 0$.
  Therefore, $f$ has a unique root $\alpha^*$ in $(0, 1)$ and we have 
  \begin{equation*}
    \text{\eqref{eq:upper-bound-case-2}} < 1 \text{ if } \alpha < \alpha^*.
  \end{equation*}
\end{enumerate}
\end{proof}

\subsection{Theorem~\ref{thm:ratio-smaller-than-1} for the Exceptional Case of $K\alpha=1$}
\label{subsec:exceptional-case}
First we show that the full version of Theorem~\ref{thm:ratio-smaller-than-1} is still valid by slightly changing the proof.
If $d=2$, the ratio \eqref{eq:balanced-nnz-ratio} is in fact
\begin{equation*}
  Kn + L\alpha n,
\end{equation*}
so \eqref{eq:proof-for-depth-2} remains the same.

If $d > 2$, we only need to check the case of $\alpha < 2/K$ because $\alpha = 1/K$ falls into it.
Now \eqref{eq:balanced-tree-nnz} becomes 
\begin{equation}
  \label{eq:balanced-tree-nnz-for-kalpha-1}
  (d-1)Kn + L\alpha^{d-1}n.
\end{equation}
Therefore, the ratio \eqref{eq:balanced-nnz-ratio} becomes
\begin{align}
  \dfrac{\text{\eqref{eq:balanced-tree-nnz-for-kalpha-1}}}{Ln} 
  &= \dfrac{K(d-1)}{L} + \alpha^{d-1}\nonumber \\
  &\le \label{eq:tmp1-in-edge-case} \dfrac{d-1}{2K^{D-2}} + \dfrac{1}{K^{d-1}}\\
  &\le \label{eq:tmp2-in-edge-case} \dfrac{d-1}{2K^{d-2}} + \dfrac{1}{K^{d-1}},
\end{align}
where \eqref{eq:tmp1-in-edge-case} follows from \eqref{eq:lower-bound-on-L}. 
For the first term in \eqref{eq:tmp2-in-edge-case}, we see that for $K \ge 4$ and $d > 2$, the derivative of
\begin{equation*}
  \dfrac{d-1}{2K^{d-2}}
\end{equation*}
with respect to $d$ is
\begin{align*}
  &\phantom{=}\dfrac{2K^{d-2}(1-(\ln K)(d-1))}{4K^{2d-4}}\\
  &< \dfrac{2K^{d-2}(1-\ln K)}{4K^{2d-4}} < 0,
\end{align*}
showing that the term is decreasing in $d$.
Because \eqref{eq:tmp2-in-edge-case} is also decreasing in $K$, we can get an upper bound by considering $K=4$ and $d=3$ to have 
\begin{align}
  \text{\eqref{eq:tmp2-in-edge-case}} 
  &\le \dfrac{3-1}{2\cdot 4^{3-2}} + \dfrac{1}{4^{3-1}} = \dfrac{5}{16} < 1.
\end{align}

Interestingly, though in Section~\ref{subsec:proof-of-thm-1} we do not require any condition on the value $K\alpha$, from the property of our balanced trees, we can prove $K\alpha \ge 1$ in the following theorem. 
Thus, the situation of $K\alpha=1$ is in fact an extreme case.
\begin{theorem}
  \label{thm:lower-bound-of-alpha}
  The feature reduction ratio $\alpha \ge 1/K$ (i.e., $K\alpha \ge 1$).
\end{theorem}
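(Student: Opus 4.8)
The plan is to exploit a simple combinatorial fact: when a node's label subset is partitioned among its $K$ children, every feature used at the parent must survive in at least one child. Under the balanced-tree modeling of Table~\ref{tbl:information-of-K-ary-tree}, a node at depth $i$ uses $\alpha^i n$ features, and each of its $K$ children at depth $i+1$ uses $\alpha^{i+1} n = \alpha\cdot\alpha^i n$ features. So the whole theorem reduces to comparing the parent's feature count against the total feature count carried by its children.

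First I would establish the key claim: each feature used at a parent node $u$ is used by at least one child of $u$. Take any feature $f$ used at $u$. By definition, $f$ has a nonzero value in some training instance $\bx_i$, and $\bx_i$ carries at least one label $j$ lying in $u$'s label subset. Since the children of $u$ partition $u$'s label subset, label $j$ belongs to exactly one child $v$; consequently $\bx_i$ is among the instances used to train node $v$, so $f$ is a used feature of $v$. Hence the set of features used at $u$ is contained in the union of the feature sets of its children.

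I would then convert this containment into the numerical bound. Writing $S_v$ for the set of features used at child $v$, the containment together with the union bound gives $\alpha^i n \le \left|\bigcup_{v} S_v\right| \le \sum_{v} |S_v| = K\alpha^{i+1} n$. Dividing both sides by $\alpha^i n > 0$ yields $1 \le K\alpha$, i.e.\ $\alpha \ge 1/K$, which is precisely the claim.

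The only delicate point, and the step I expect to be the main obstacle, is justifying the key claim cleanly, because it hinges on exactly how instances are routed to children: an instance is used at every child whose label subset it intersects, and a feature used at $u$ need only be inherited by the one child receiving the relevant label. Once this is pinned down, the counting step is an elementary union bound and the balanced-tree assumption supplies the per-depth feature counts $\alpha^i n$ directly.
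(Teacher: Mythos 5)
Your proof is correct and follows essentially the same counting argument as the paper: the paper's (very terse) proof assumes for contradiction that $\alpha < 1/K$ and notes that the children would then carry fewer than $K\cdot\alpha n < n$ features in total, leaving the covering step implicit. Your version is actually more complete, since you explicitly justify why every feature used at a parent must survive in at least one child (via the routing of the instance carrying it to the child owning the relevant label), which is exactly the fact the paper's ``which leads to a contradiction'' relies on.
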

\begin{proof}[Proof of Theorem~\ref{thm:lower-bound-of-alpha}]
Suppose $n$ features are used at a node $u$.
According to the definition of $\alpha$, there are $\alpha n$ features used in each child node of $u$.
Assume for contradiction that $\alpha < 1/K$. 
The total number of used features among all $u$'s child nodes would be less than
\begin{equation*}
  K\times (\alpha n) < n,
\end{equation*}
which leads to a contradiction.  
\end{proof}

\subsection{Proof of Theorem~\ref{thm:ratio-decreasing-in-d}}
We prove the theorem by showing that for 
\begin{equation}
  \label{eq:range-of-decreasing-d}
  d = 2, \dots, D-2,
\end{equation}
the number of non-zero weight values in a tree of depth $d$ is more than a tree of depth $d+1$.
For a tree of depth $d$, the number of non-zeros is 
\begin{equation}
  \label{eq:nnz-for-depth-d}
  \sum_{i=0}^{d-2} (K^i)(K)(\alpha^i n) + L\alpha^{d-1} n,
\end{equation}
and the number of non-zeros in a depth $(d+1)$ tree is
\begin{equation}
  \label{eq:nnz-for-depth-d-add-1}
  \sum_{i=0}^{d-1} (K^i)(K)(\alpha^i n) + L\alpha^{d} n.
\end{equation}
Then we have
\begin{align}
  \text{\eqref{eq:nnz-for-depth-d}}-\text{\eqref{eq:nnz-for-depth-d-add-1}}
  &= L\alpha^{d-1}n - K^{d}\alpha^{d-1}n - L\alpha^d n\nonumber\\
  &= \label{eq:tmp-in-thm-2} \alpha^{d-1}n(L(1-\alpha)-K^d).
\end{align}
Since we have $L \ge 2K^{D-1}$ from \eqref{eq:lower-bound-on-L} and assume $\alpha < 1-1/(2K)$,
\begin{align}
  L(1-\alpha) &\ge 2K^{D-1}(1-\alpha)\nonumber\\
              &> \dfrac{2K^{D-1}}{2K}\nonumber\\
              &= \label{eq:inner-term-in-the-model-size-difference} K^{D-2} \ge K^d,
\end{align}
where the last inequality is from the range of $d$ considered in \eqref{eq:range-of-decreasing-d}.
We then use \eqref{eq:inner-term-in-the-model-size-difference} in \eqref{eq:tmp-in-thm-2} to show that $\text{\eqref{eq:nnz-for-depth-d}}-\text{\eqref{eq:nnz-for-depth-d-add-1}} > 0$.


\section{Comments on Theorem~\ref{thm:ratio-decreasing-in-d}}
\label{sec:comments-on-decreasing-theorem}
\begin{figure}[t]
    \centering
    \includegraphics[width=0.4\textwidth]{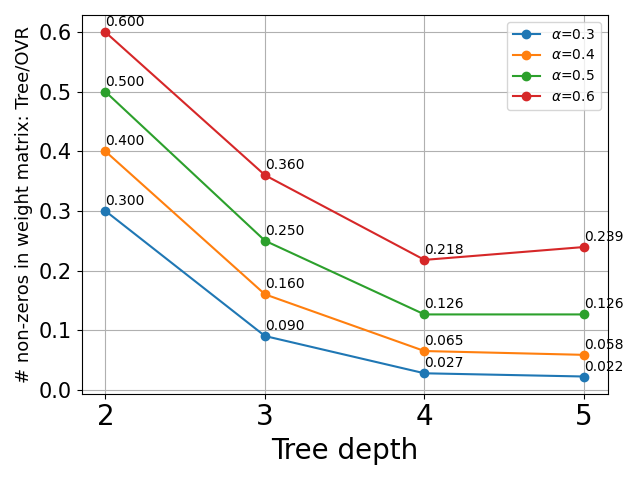}
    \caption[The ratio of number of non-zeros between a tree model and an OVR model.]{\label{fig:fixed-K-with-different-alpha-paper}The ratio of number of non-zeros between a tree model and an OVR model from $d=2$ to $d=D$.}
\end{figure}  
In Theorem~\ref{thm:ratio-decreasing-in-d}, we show the model size decreases for $2 \le d \le D-2$.
However, the ratio for $d=D$ may be larger than the ratio for $d=D-1$.
Figure~\ref{fig:fixed-K-with-different-alpha-paper} is the plot from $d=2$ to $d=D$ using the same example as in Section~\ref{subsec:case-study}.
We see that for $\alpha = 0.6$, the ratio for $d=5$ is slightly larger than $d=4$.
Therefore, our theorem gives the widest interval for a decreasing ratio under the given assumption $\alpha < 1-1/2K$.

\section{Details of Experimental Settings}
\label{sec:exp-settings}

We use LibMultiLabel\footnote{\url{https://www.csie.ntu.edu.tw/~cjlin/libmultilabel/}} version 0.6.0.
For data sets, the specific link of each set is as follows
\begin{itemize}
    \item EUR-Lex:
    \begin{itemize}
        \item Training: \url{https://www.csie.ntu.edu.tw/~cjlin/libsvmtools/datasets/multilabel/eurlex_tfidf_train.svm.bz2}
        \item Testing: \url{https://www.csie.ntu.edu.tw/~cjlin/libsvmtools/datasets/multilabel/eurlex_tfidf_test.svm.bz2}
    \end{itemize}
    \item Wiki10-31k:
    \begin{itemize}
        \item Training: \url{Training: https://www.csie.ntu.edu.tw/~cjlin/libsvmtools/datasets/multilabel/wiki10_31k_tfidf_train.svm.bz2}
        \item Testing: \url{Testing: https://www.csie.ntu.edu.tw/~cjlin/libsvmtools/datasets/multilabel/wiki10_31k_tfidf_test.svm.bz2}
    \end{itemize}
    \item AmazonCat-13k:
    \begin{itemize}
        \item Training: \url{https://www.csie.ntu.edu.tw/~cjlin/libsvmtools/datasets/multilabel/AmazonCat-13K_tfidf_train_ver1.svm.bz2}
        \item Testing: \url{https://www.csie.ntu.edu.tw/~cjlin/libsvmtools/datasets/multilabel/AmazonCat-13K_tfidf_test_ver1.svm.bz2}
    \end{itemize}
    \item Amazon-670k:
    \begin{itemize}
        \item Training: \url{https://www.csie.ntu.edu.tw/~cjlin/libsvmtools/datasets/multilabel/Amazon-670K_tfidf_train_ver2.svm.bz2}
        \item Testing: \url{https://www.csie.ntu.edu.tw/~cjlin/libsvmtools/datasets/multilabel/Amazon-670K_tfidf_test_ver2.svm.bz2}
    \end{itemize}
    \item Wiki-500k and Amazon-3m are downloaded from the following GitHub repository provided in \citet{RY19a}
    \begin{itemize}
        \item \url{https://github.com/yourh/AttentionXML}
    \end{itemize}
\end{itemize}

\section{Empirical Observations on the Feature Reduction Ratio $\alpha$}
\label{sec:empirical-alpha}

Figure~\ref{fig:alpha-in-real-world} shows the histogram of the reduction ratio $\alpha_u$ for each internal node $u$, computed by
\begin{equation}
  \label{eq:alpha_mu_formulation}
  \dfrac{\text{\# used features of $u$}}{\text{\# used features of $u$'s parent}}.
\end{equation}
In the same figure we also show the weighted average of $\alpha$ for depth-$i$, denoted by $\bar{\alpha}$ and defined as 
\begin{equation}
  \label{eq:weighted-average-alpha}
  \bar{\alpha} = \dfrac{\sum\limits_{u\in \text{depth-}i} \alpha_u\cdot \text{\# children of $u$}}{\sum\limits_{u\in \text{depth-}i} \text{\# children of $u$}}.
\end{equation}
We use Figure~\ref{fig:weighted_avg} to illustrate the reason for reporting the weighted average.
When training nodes at depth-$1$, node B, C and D has respectively two, two and six weight vectors, so the average features used for each weight vector should be 
\begin{equation*}
  \label{eq:avg_feat_each_weight_vector}
  \bar{n} = \dfrac{10\cdot 2+30\cdot 2+80\cdot 6}{2+2+6},
\end{equation*}
and the average reduction ratio is 
\begin{align*}
  \frac{\bar{n}}{100}
  &= \frac{0.1 \cdot 2 + 0.3 \cdot 2 + 0.8 \cdot 6}{2 + 2 + 6}\\
  &= \frac{\alpha_B \cdot 2 + \alpha_C \cdot 2 + \alpha_D \cdot 6}{2 + 2 + 6},
\end{align*}
which is equivalent to \eqref{eq:weighted-average-alpha}.
\tikzset{
  hollow node/.style={circle,draw,inner sep=5},
}

For results in Figure~\ref{fig:alpha-in-real-world}, we construct the trees by setting $K=100$ and $d_{\text{max}}=6$.
The depth of the resulting tree for EUR-Lex and Amazon-13k is only 3 and 4 respectively because the construction has reached the termination condition.
Clearly, we see that most $\bar{\alpha}$'s are small, generally much lower than 0.5.
The only exception is in the depth-5 of the Amazon-3m set.
The reason of a relatively larger $\bar{\alpha}$ is that, under an unbalanced setting, some nodes at layer $d_{\text{max}}-1$ still contain many labels (much more than $K$) and need further partitioning, but the tree has reached $d_\text{max}$.
Therefore, these nodes have more used features than others in the same layer.
Then their $\alpha$ values from \eqref{eq:alpha_mu_formulation} are larger than others. 
Together with their larger number of children, these nodes dominate the calculation in \eqref{eq:weighted-average-alpha} and lead to a large weighted average.

\begin{figure*}[t]
    \centering
    \begin{subfigure}[h]{0.39\textwidth}
        \centering
        \includegraphics[width=\textwidth]{
          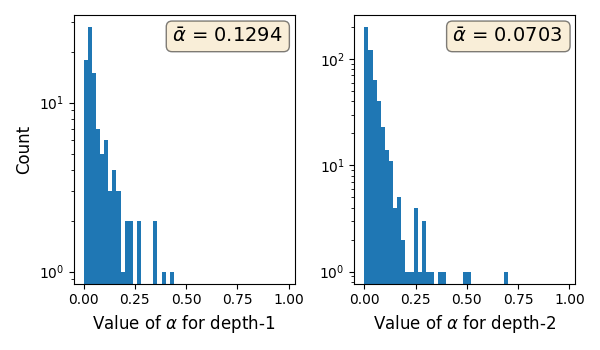}
        \vspace{-20pt}\caption{EUR-Lex}
    \end{subfigure}
    \begin{subfigure}[h]{0.59\textwidth}
        \centering
        \includegraphics[width=\textwidth]{
          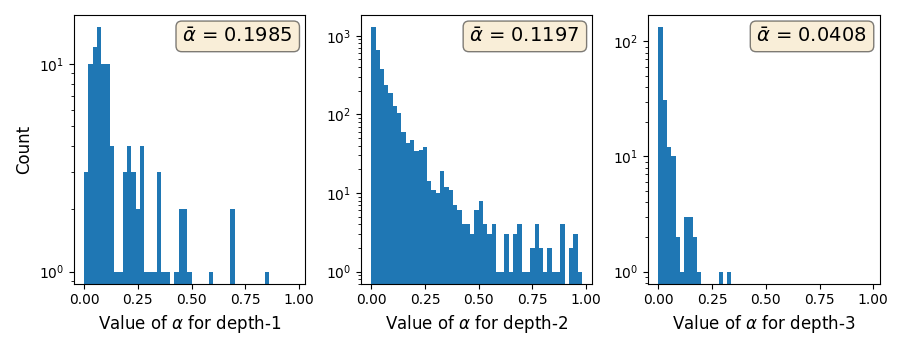}
        \vspace{-20pt}\caption{Amazoncat-13k}
    \end{subfigure} \hfill
    \vspace{0.3cm}
    \begin{subfigure}[h]{\textwidth}
      \centering
      \includegraphics[width=\textwidth]{
        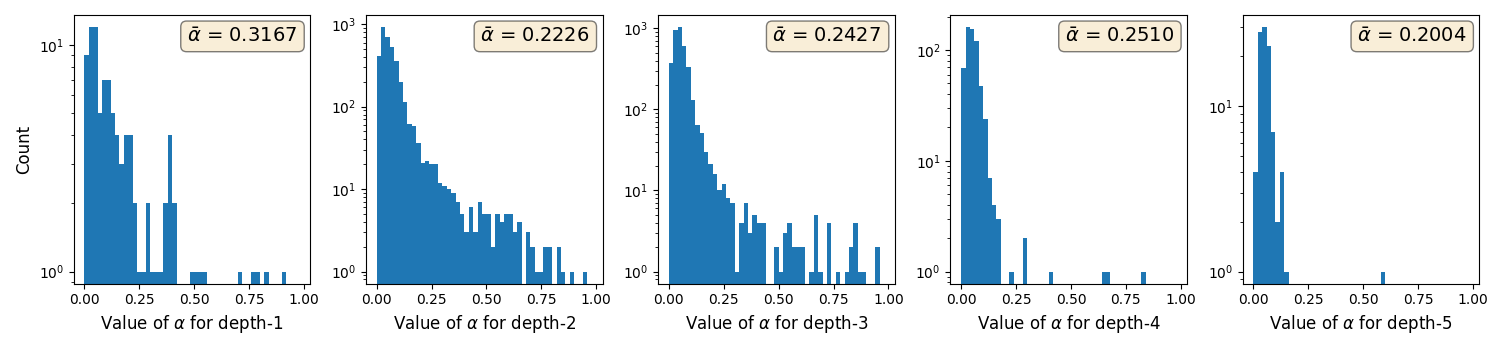}
        \vspace{-20pt}  \caption{Wiki10-31k}
    \end{subfigure} \hfill
    \vspace{0.3cm}
    \begin{subfigure}[h]{\textwidth}
        \centering
        \includegraphics[width=\textwidth]{
          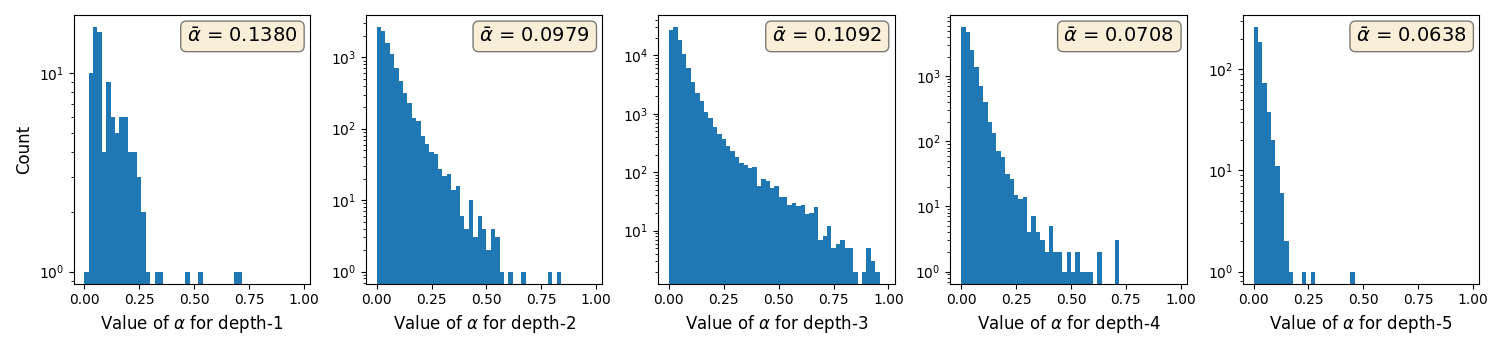}
          \vspace{-20pt}  \caption{Wiki-500k}
    \end{subfigure}  \hfill
    \vspace{0.3cm}
    \begin{subfigure}[h]{\textwidth}
        \centering
        \includegraphics[width=\textwidth]{
          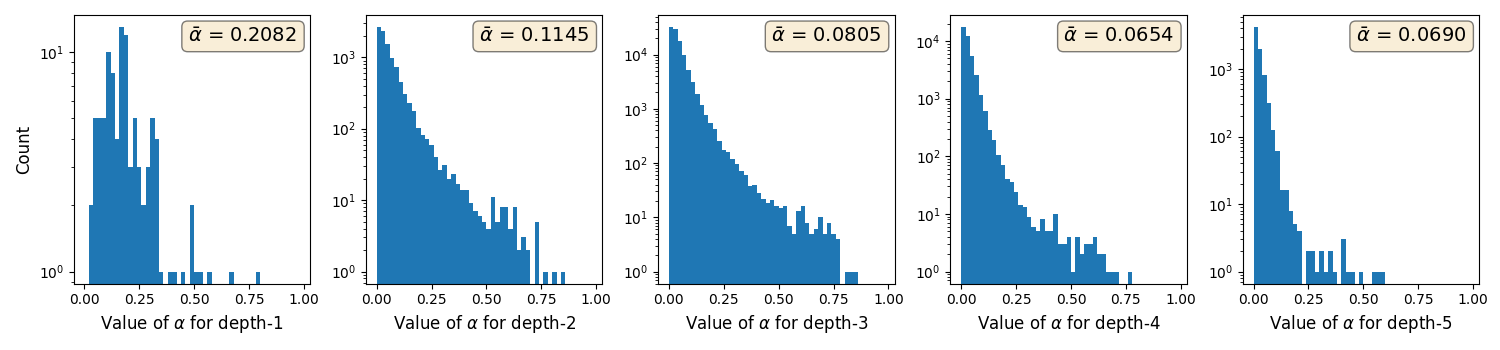}
          \vspace{-20pt}  \caption{Amazon-670k}
    \end{subfigure}  \hfill
    \vspace{0.3cm}
    \begin{subfigure}[h]{\textwidth}
        \centering
        \includegraphics[width=\textwidth]{
          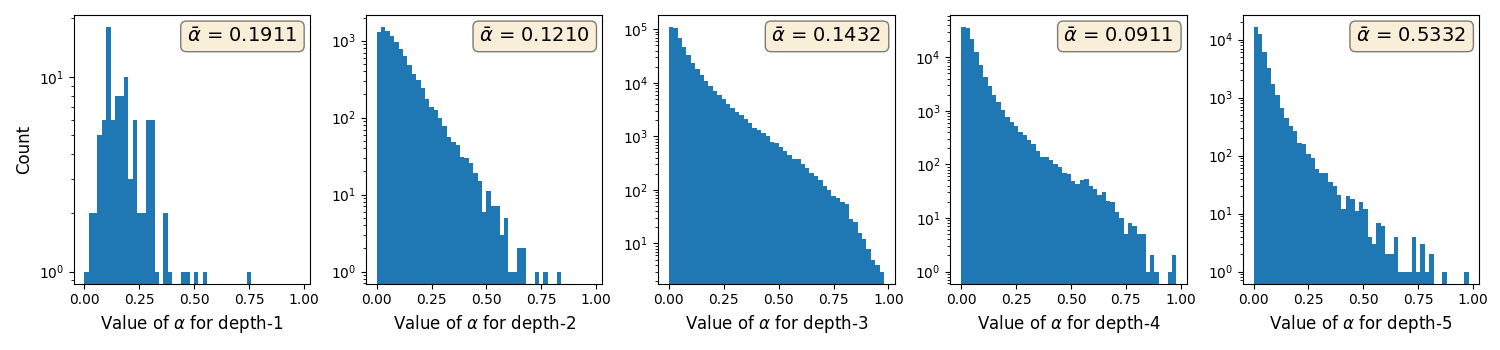}
          \vspace{-20pt}  \caption{Amazon-3m}
    \end{subfigure}  
    \caption{\label{fig:alpha-in-real-world}The histogram of $\alpha$ values of nodes with the same depth. The weighted average $\bar{\alpha}$ defined in \eqref{eq:weighted-average-alpha} is listed in the box of each subfigure.}
\end{figure*}

\begin{figure*}[t]
  \centering
  \begin{tikzpicture}
      [level distance=2cm,
          level 1/.style={sibling distance=3.8cm},
          level 2/.style={sibling distance=0.7cm}]
      \node [hollow node, label=above:{used features: {\color{red}100}}]{A}
      child {node [hollow node, label={[align=left,xshift=3.5em,yshift=0.3em]135:used features: {\color{red}10}}]{B}
          child {node {\shortstack{{\color{blue}1}\\\phantom{\{\}}}}
              child [grow=left, xshift=-1cm] {node {Depth 2} edge from parent[draw=none]
                  child [grow=up] {node {Depth 1} edge from parent[draw=none]
                      child [grow=up] {node {Depth 0} edge from parent[draw=none]}
                  }
              }
          }
          child {node {\shortstack{{\color{blue}2}\\\phantom{\{\}}}}}
      }
      child {node [hollow node, label=above:{used features: {\color{red}30}}]{C}
          child {node {\shortstack{{\color{blue}3}\\\phantom{\{\}}}}}
          child {node {\shortstack{{\color{blue}4}\\\phantom{\{\}}}}}
      }
      child {node [hollow node, label={[align=right,xshift=5.8em,yshift=0.3em]135:used features: {\color{red}80}}]{D}
          child {node {\shortstack{{\color{blue}5}\\\phantom{\{\}}}}}
          child {node {\shortstack{{\color{blue}6}\\\phantom{\{\}}}}}
          child {node {\shortstack{{\color{blue}7}\\\phantom{\{\}}}}}
          child {node {\shortstack{{\color{blue}8}\\\phantom{\{\}}}}}
          child {node {\shortstack{{\color{blue}9}\\\phantom{\{\}}}}}
          child {node {\shortstack{{\color{blue}10}\\\phantom{\{\}}}}}
      }
      ;
  \end{tikzpicture}
  \caption[A label tree with ten labels.]{
  A label tree with ten labels.
  At depth 1, nodes B, C, and D respectively train two, two, and six linear classifiers.
  We explain in Appendix~\ref{sec:empirical-alpha} that a weighted average of $\alpha$ values
  at nodes B, C, and D as in \eqref{eq:weighted-average-alpha} is a reasonable setting
  to calculate the reduction ratio for depth-1.
  }
  \label{fig:weighted_avg}
\end{figure*}
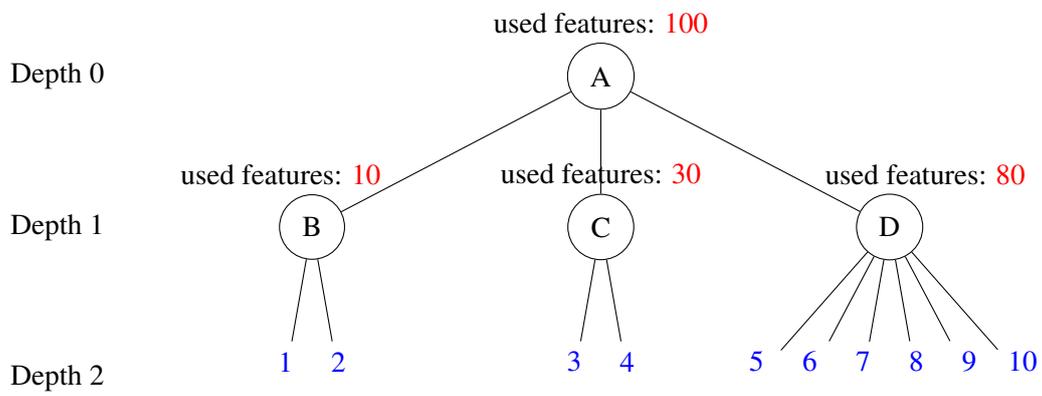


\end{document}